\renewcommand{\cite}[1]{\citep{#1}}
\crefname{algocf}{Algorithm}{Algorithms}
\Crefname{algocf}{Algorithm}{Algorithms}
\definecolor{mydarkblue}{rgb}{0,0.08,0.45}
\DeclarePairedDelimiterX{\SquareBrackets}[1]{[}{]}{#1}
\DeclarePairedDelimiterX{\RoundBrackets}[1]{(}{)}{#1}
\DeclarePairedDelimiterX{\DivergenceBrackets}[2]{[}{]}{#1\;\delimsize\|\;#2}
\NewDocumentCommand{\pr}{ O{p} r() }{
  \def\prArg{#2}\patchcmd{\prArg}{|}{\mid}{}{}#1\RoundBrackets{\prArg}}
\NewDocumentCommand{\p}{ r() }{\pr[p](#1)}
\NewDocumentCommand{\q}{ r() }{\pr[q](#1)}
\NewDocumentCommand{\Normal}{ r() }{\pr[\operatorname{Normal}](#1)}
\NewDocumentCommand{\Cat}{ r() }{\pr[\operatorname{Cat}](#1)}
\NewDocumentCommand{\Bin}{ r() }{\pr[\operatorname{Bin}](#1)}
\NewDocumentCommand{\Beta}{ r() }{\pr[\operatorname{Beta}](#1)}
\NewDocumentCommand{\Bernoulli}{ r() }{\pr[\operatorname{Bernoulli}](#1)}
\NewDocumentCommand{\Dir}{ r() }{\pr[\operatorname{Dir}](#1)}
\newlength\widthE
\icmltitlerunning{Goal-Aware Prediction: Learning to Model What Matters}
\titlespacing\section{0pt}{3pt plus 2pt minus 2pt}{0pt plus 2pt minus 2pt}
\titlespacing\subsection{0pt}{3pt plus 4pt minus 2pt}{0pt plus 2pt minus 2pt}
\titlespacing\subsubsection{0pt}{3pt plus 4pt minus 2pt}{0pt plus 2pt minus 2pt}
\newcommand\blfootnote[1]{%
  \begingroup
  \renewcommand\thefootnote{}\footnote{#1}%
  \addtocounter{footnote}{-1}%
  \endgroup
}
\begin{document}

\twocolumn[
\icmltitle{Goal-Aware Prediction: Learning to Model What Matters}
\begin{icmlauthorlist}
\icmlauthor{Suraj Nair}{to}
\icmlauthor{Silvio Savarese}{to}
\icmlauthor{Chelsea Finn}{to}
\end{icmlauthorlist}
\icmlaffiliation{to}{Stanford University}
\icmlcorrespondingauthor{Suraj Nair}{surajn@stanford.edu}
\vskip 0.3in
]
\printAffiliationsAndNotice

\begin{abstract}
Learned dynamics models combined with both planning and policy learning algorithms have shown promise in enabling artificial agents to learn to perform many diverse tasks with limited supervision. However, one of the fundamental challenges in using a learned forward dynamics model is the mismatch between the objective of the learned model (future state reconstruction), and that of the downstream planner or policy (completing a specified task). This issue is exacerbated by vision-based control tasks in diverse real-world environments, where the complexity of the real world dwarfs model capacity. In this paper, we propose to direct prediction towards task relevant information, enabling the model to be aware of the current task and encouraging it to only model relevant quantities of the state space, resulting in a learning objective that more closely matches the downstream task. Further, we do so in an entirely self-supervised manner, without the need for a reward function or image labels. We find that our method more effectively models the relevant parts of the scene conditioned on the goal, and as a result outperforms standard task-agnostic dynamics models and model-free reinforcement learning.
\end{abstract}
\section{Introduction}
\label{sec:intro}

Enabling artificial agents to learn from their prior experience and generalize their knowledge to new tasks and environments remains an open and challenging problem. Unlike humans, who have the remarkable ability to quickly generalize skills to new objects and task variations, current methods in multi-task reinforcement learning require heavy supervision across many tasks before they can even begin to generalize well. One way to reduce the dependence on heavy supervision is to leverage data that the agent can collect autonomously without rewards or labels, termed \textit{self-supervision}. 
One of the more promising directions in learning transferable knowledge from this unlabeled data
lies in learning the dynamics of the environment, as the physics underlying the world are often consistent across scenes and tasks. However learned dynamics models do not always translate to good downstream task performance, an issue which we study and attempt to mitigate in this work.

While learning dynamics in low-dimensional state spaces has shown promising results \cite{McAllister2016ImprovingPW, Deisenroth11pilco:a, chua_handful, awarenessmodels_amos}, scaling to high dimensional states, such as image observations,
poses numerous challenges. One key challenge is that, in high dimensional spaces, learning a perfect model is often impossible due to limited model capacity, 
and as a result downstream task-specific planners/policies struggle from inaccurate model predictions. Specifically, a learned planner/policy will often exploit errors in the model that make it drastically overestimate its performance.
Furthermore, depending on the nature of the downstream task, prediction accuracy on certain states may be more important, which is not captured by the next state reconstruction objective used to train forward dynamics models.
\blfootnote{Videos/code can be found at \url{https://sites.google.com/stanford.edu/gap}}

Our primary hypothesis is that this ``objective mismatch'' between the training objective of the learned model (future state reconstruction), and the downstream planner or policy (completing a specified task) is one of the primary limitations in learning models of high dimensional states.  In other words, the learned model is encouraged to predict large portions of the state which may be irrelevant to the task at hand. Consider for example the task of picking up a pen from a cluttered desk. The standard training objective of the learned model would encourage it to equally weigh modeling the pen and all objects on the table, when given the downstream task, modeling the pen and adjacent objects precisely is clearly the most critical.

To that end, we propose goal-aware prediction (GAP), a framework for learning forward dynamics models that direct their capacity differently conditioned on the task, resulting in a model that is more accurate on trajectories
most relevant to the downstream task.
Specifically, we propose to learn a latent representation of not just the state, but both the state and goal, and to learn dynamics in this latent space. Furthermore, we can learn this latent space in a way that focuses primarily on parts of the state relative to achieving the goal,
namely by reconstructing the \emph{goal-state residual}
instead of the full state. We find that this modification combined with training via goal-relabeling \cite{her} allows us to learn expressive, task-conditioned dynamics models in an \emph{entirely self-supervised} manner. We observe that GAP learns dynamics that achieve significantly lower error on task relevant states, and as a result outperforms standard latent dynamics model learning
and self-supervised model-free reinforcement learning \cite{ashvinnairRIG} across a range of vision based control tasks. 
\section{Related Work}
\label{sec:related}

Recent years have seen impressive results from reinforcement learning \cite{Sutton1998} applied to challenging problems such as video games \cite{Mnih2015HumanlevelCT, OpenAI_dota}, Go \cite{AlphaGo}, and robotics \cite{DBLP:journals/corr/LevineFDA15, openai_dexterous, qtopt}. 
However, the dependence on large quantities of labeled data can limit the applicability of these methods in the real world. 
One approach is to leverage
self-supervision, where an agent only uses data that it can collect autonomously.

\textbf{Self-Supervised Reinforcement Learning:} Self-supervised reinforcement learning explores how RL can leverage data which the agent can collect autonomously to learn meaningful behaviors, without dependence on task specific reward labels, with promising results on tasks such as robotic grasping and object re-positioning \cite{DBLP:journals/corr/PintoG15, visualforesightebert, andyzengsynergy}. One approach to self-supervised RL has been combining goal-conditioned policy learning~\citep{Kaelbling93learningto, pmlr-v37-schaul15, Codevilla2017EndtoEndDV} with goal re-labeling \cite{her} or sampling goals \cite{ashvinnairRIG, Nair2019ContextualIG}. While there are numerous ways to leverage self-supervised data, ranging from learning distance metrics \cite{dpn, Hartikainen2019DynamicalDL}, generative models over the state space \cite{causalinfogan, Fang2019DynamicsLW,sorb, liu2020hallucinative,  nair2020hierarchical}, 
and representations \cite{Veerapaneni2019EntityAI}, one of the most heavily utilized techniques is learning the dynamics of the environment \cite{e2c,DBLP:journals/corr/FinnL16, Agrawal2016LearningTP}.

\textbf{Model-Based Reinforcement Learning:}
Learning a model of the dynamics of the environment and using it to complete tasks has been a well studied approach to solving reinforcement learning problems, either through planning with the model \cite{Deisenroth11pilco:a,e2c, McAllister2016ImprovingPW, Banijamali2017RobustLC, chua_handful, awarenessmodels_amos, DBLP:journals/corr/abs-1811-04551hafner, Nagabandi2019DeepDM} or optimizing a policy in the model \cite{Racanire2017ImaginationAugmentedAF, Ha2018WorldM, Kaiser2019ModelBasedRL, Lee2019StochasticLA, Janner2019WhenTT, Wang2019ExploringMP, Hafner2019DreamTC, Gregor2019ShapingBS, Byravan2019ImaginedVG}. 
Numerous works have explored how these methods might leverage deep neural networks to extend to high dimensional problem settings, such as images. One technique has been to learn large video prediction models \cite{DBLP:journals/corr/FinnL16,sv2p,ebertskip, visualforesightebert,paxton, savp, highfidelity_villegas, xietooluse}, however model under-fitting remains an issue for these approaches \cite{Dasari2019RoboNetLM}. Similarly, many works have explored learning low dimensional latent representations of high dimensional states \cite{e2c, Dosovitskiy2016LearningTA, Zhang2018SOLARDS, DBLP:journals/corr/abs-1811-04551hafner,causalinfogan, ichterlatent, wangvisualplan, Lee2019StochasticLA, deepmdp_gelada} and learning the dynamics in the latent space. Unlike these works, we aim to make the problem easier by encouraging the network to predict only task-relevant quantities, while also changing the objective, and hence the distribution of prediction errors, in a task-driven way. This allows the prediction problem to be more directly connected to the downstream use-case of task-driven planning. 

\textbf{Addressing Model Errors:}
Other works have also studied the problem of model error and exploitation. Approaches such as ensembles \cite{chua_handful, Thananjeyan2019SafetyAV} have been leveraged to measure uncertainty in model predictions. Similarly, \citet{Janner2019WhenTT} explore only leveraging the learned model over finite horizons where it has accurate predictions and \citet{DBLP:journals/corr/LevineFDA15} use local models. 
Exploration techniques can also be used to collect more data where the model is uncertain \cite{Pathak2017CuriosityDrivenEB}. 

Most similar to our proposed approach are techniques which explicitly change the models objective to optimize for performance on downstream tasks. \cite{ Schrittwieser2019MasteringAG, havens2020learning} explore only predicting future reward to learn a latent space in which they learn dynamics, 
\citet{Freeman2019LearningTP} learn a model with the objective of having a policy achieve high reward from training in it, and \citet{Amos2018DifferentiableMF, Srinivas2018UniversalPN} embed a model/planner inside a neural network.
Similarly, \citet{pmlr-v54-farahmand17a, d2019gradient, Lambert2020ObjectiveMI} explore how model training can be re-weighted using value functions, policy gradients, or expert trajectories to emphasize task performance. Unlike these works, which depend heavily on task-specific supervision, our approach can be learned on purely self-supervised data, and generalize to unseen tasks.

\section{Goal-Aware Prediction}
\label{sec:method}
\newtheorem{theorem}{Theorem}[section]

We consider a goal-conditioned RL problem setting (described next), for which we utilize a model-based reinforcement learning approach. The key insight of this work stems from the idea that the distribution of model errors greatly affects task performance and that, when faced with limited model capacity, we can control the distribution of errors to achieve better task performance.
We theoretically and empirically investigate this effect in Sections~\ref{sec:theory} and~\ref{sec:navigation} before describing our approach for skewing the distribution of model errors in Section~\ref{sec:gap}.

\subsection{Preliminaries}
\label{sec:prelim}

We formalize our problem setting as a goal-conditioned Markov decision process 
(MDP) defined by the tuple $(\mathcal{S}, \mathcal{A}, p, \mathcal{G}, \lambda)$ where $s \in \mathcal{S}$ is the state space, $a \in \mathcal{A}$ is the action space, $p(s_{t+1} | s_t, a_t)$
governs the environment dynamics, $p(s_0)$ corresponds to the initial state distribution,
$\mathcal{G} \subset \mathcal{S}$ represents the \emph{unknown} set of goal states which is a subset of possible states, 
and $\lambda$ is the discount factor. Note that this is simply a special  case of a Markov decision process, where we do not have access to extrinsic reward (i.e. it is self-supervised), and where we separate the state and goal for notational clarity.

We will assume that the agent has collected an unlabeled dataset $\mathcal{D}$ that consists of $N$ trajectories $[\tau_1, ..., \tau_N]$, and each trajectory $\tau$ consists of a sequence of state action pairs $[(s_0, a_0), (s_1, a_1), ..., (s_T)]$.  We will denote the estimated distance between two states as $C(s_t, s_g) = ||s_t-s_g||_2^2$, which may not accurately reflect the true distance, e.g. when states correspond to images. At test time, the agent is initialized at a start state $s_0 \sim p(s_0)$ with a goal state $s_g$ sampled at random from $\mathcal{G}$, and must minimize cost $\mathcal{C}(s_t, s_g)$. 
We assume that for any states $s_t, s_g$ we can measure $\mathcal{C}(s_t, s_g)$ as the distance between the states, for example in image space $\mathcal{C}$ would be pixel distance. Success is measured as reaching within some true distance of $s_g$.

In the model-based RL setting we consider here, the agent aims to solve the RL problem by learning a model of the dynamics $p_\theta(s_{t+1} | s_t, a_t)$ from experience, and using that model to plan a sequence of actions or optimize a policy.

\subsection{Understanding the Effect of Model Error on Task Performance}
\label{sec:theory}

A key challenge in model-based RL is that dynamics prediction error does not directly correspond to task performance. Specifically, for good task performance, certain model errors may be more costly than others, and if errors are simply distributed uniformly over dynamics predictions, errors in these critical areas may be exploited when selecting actions downstream. 
Intuitively, when optimizing actions for a given task, we would like our model to to give accurate predictions for actions 
that are important for completing the task, while the model likely does not need to be as accurate on trajectories that are completely unrelated to the task. In this section, we formalize this intuition.

Suppose the model is used by a policy to select from $N$ action sequences $a_{1:T}^i$, each with expected final cost $c^*_i = E_{p(s_{t+1} | s_t, a_t), a_{1:T}^i} [ \mathcal{C}(s_T, s_g)]$. 
Without loss of generality, let $c^*_1 \leq c^*_2 ... \leq c^*_N$, 
i.e. the order of action sequences is sorted by their cost under the true model, 
which is unknown to the agent. Denote $\hat{c}_i$ as the predicted final cost of action sequence $a_{1:T}^i$ under the learned model,
i.e. $\hat{c}_i = E_{p_\theta(s_{t+1} | s_t, a_t), a_{1:T}^i} [ \mathcal{C}(\hat{s_T}, s_g)]$. Moreover, we consider a policy that simply selects the action sequence with lowest cost under the model: $\hat{a} = \arg\min_{a_{1:T}^i} \hat{c}_i$. 
\textit{Let the policies behavior be $\epsilon$-optimal  if the cost of the selected action sequence $a_{1:T}^i$ has cost $c^*_i \leq c^*_1 + \epsilon$.}
Under this set-up, we now analyze how model error affects policy performance.

\begin{theorem}
\label{thrm}
The policy will remain $\epsilon$-optimal, that is, 
\begin{equation}
c^*_{i'} \leq c^*_1 + \epsilon ~~~~~ i' = \arg\min_i {\hat{c_i}}
\label{eq:4}
\end{equation}
if the following two conditions are met:
first, that the model prediction error on the \textbf{best} action sequence $a^1_{1:T}$ is bounded such that 
\vspace{-0.2cm}
\begin{equation}
|c^*_1 - \hat{c}_1 | < \epsilon
\label{eq:1}
\end{equation}
and second, that the errors of sub-optimal actions sequences $a_{1:T}^i$ are bounded by 
\begin{equation}
|c^*_i-\hat{c}_i| < (c^*_i - c^*_1) - \epsilon ~~~\forall i \mid c^*_i > c^*_1 + \epsilon
\label{eq:2}
\end{equation}
\end{theorem}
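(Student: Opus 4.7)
The plan is to prove the contrapositive by contradiction: assume the selected action sequence $a_{1:T}^{i'}$ violates $\epsilon$-optimality, i.e.\ $c^*_{i'} > c^*_1 + \epsilon$, and derive that $\hat{c}_{i'} > \hat{c}_1$, contradicting the fact that $i' = \arg\min_i \hat{c}_i$. This exploits the asymmetry built into the hypotheses: condition \eqref{eq:1} pins $\hat{c}_1$ close to $c^*_1$ from both sides, while condition \eqref{eq:2} limits how much any strictly suboptimal sequence's predicted cost can dip below the threshold $c^*_1 + \epsilon$.

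The key steps I would carry out in order are the following. First, apply condition \eqref{eq:1} to get the strict upper bound $\hat{c}_1 < c^*_1 + \epsilon$. Second, invoking the contradiction hypothesis $c^*_{i'} > c^*_1 + \epsilon$, activate condition \eqref{eq:2} for index $i'$, which gives $|c^*_{i'} - \hat{c}_{i'}| < (c^*_{i'} - c^*_1) - \epsilon$. Third, unpack the absolute value on the lower side to extract
\begin{equation*}
\hat{c}_{i'} > c^*_{i'} - \bigl((c^*_{i'} - c^*_1) - \epsilon\bigr) = c^*_1 + \epsilon.
\end{equation*}
Fourth, chain these two bounds: $\hat{c}_1 < c^*_1 + \epsilon < \hat{c}_{i'}$, which contradicts $\hat{c}_{i'} = \min_i \hat{c}_i \leq \hat{c}_1$. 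Hence $c^*_{i'} \leq c^*_1 + \epsilon$ and \eqref{eq:4} holds.

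There is no real obstacle here; the result is essentially an unpacking of the definitions, and the only subtlety worth flagging is ensuring the strict-versus-non-strict inequalities align consistently (the hypotheses are strict, so the derived chain is also strict, which comfortably implies the non-strict $\epsilon$-optimality conclusion). One small point I would check carefully is the edge case where every sequence other than $a_{1:T}^1$ is already within $\epsilon$ of optimal, so that condition \eqref{eq:2} is vacuous for those indices; in that case any choice of $i'$ is automatically $\epsilon$-optimal and the argument above is not even needed. I would state the proof cleanly by first handling this trivial case and then running the contradiction above on the remaining indices.
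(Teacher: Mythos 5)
Your proof is correct and is essentially the same argument as the paper's: both rest on the identical pair of inequalities $\hat{c}_1 < c^*_1 + \epsilon$ (from condition \eqref{eq:1}) and $\hat{c}_i > c^*_1 + \epsilon$ (from condition \eqref{eq:2}) for every $i$ with $c^*_i > c^*_1 + \epsilon$, so that no such sequence can minimize $\hat{c}$. The only difference is presentational --- the paper packages this as a ``worst-case estimates'' argument while you run it as a contradiction on the selected index $i'$ --- and your remark about the vacuous case for sequences within $\epsilon$ of optimal matches the paper's closing note.
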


\begin{proof}
For the specified policy, violating $\epsilon$-optimality will only occur if
the cost of the best action sequence $a_{1:T}^1$ is overestimated or if the cost of a sub-optimal action sequence ($i \mid c^*_i > c^*_1 + \epsilon$) is underestimated. Thus, let us define the "worst case" cost predictions as the ones for which $c_1^*$ is most overestimated and $c_i^* ~~~\forall i \mid c^*_i > c^*_1 + \epsilon$ are most underestimated (while still satisfying Equations \ref{eq:1} and \ref{eq:2}).
Concretely we write the worst case cost estimates as 
$$\Tilde{c}_i := \min{\hat{c}_i} ~~~\forall i \mid c^*_i > c^*_1 + \epsilon$$
$$\Tilde{c}_1 := \max{\hat{c}_1}$$ 
s.t. Eq. \ref{eq:1} and \ref{eq:2} hold. We will now show that $\Tilde{c_1} < \Tilde{c_i} ~~~\forall i \mid c^*_i > c^*_1 + \epsilon$.
First, since $\Tilde{c_i}$ satisfies Eq. \ref{eq:2}, we have that $$\Tilde{c_i} > c^*_i - (c^*_i - c_1^*) + \epsilon$$ Similarly, since $\Tilde{c_1}$ satisfies Eq. \ref{eq:1}, we have that $$\Tilde{c_1} < c_1^* + \epsilon$$
Substituting, we see that 
\begin{equation}
\Tilde{c_i} > c^*_i - (c^*_i -c_1^*) + \epsilon = c_1^* + \epsilon > \Tilde{c_1} ~~~\forall i \mid c^*_i > c^*_1 + \epsilon
\label{eq:3}
\end{equation}
Hence even in the worst case, Equations \ref{eq:1} and \ref{eq:2} ensure that  $\hat{c_i} > \hat{c_1} ~~~\forall i \mid c^*_i > c^*_1 + \epsilon$, and thus no action sequence $i$ for which $c^*_i > c^*_1 + \epsilon$ will be selected and the policy will remain $\epsilon$-optimal. Note that action sequences besides $i=1$ for which $c^*_i \leq c^*_1 + \epsilon$ costs are unbounded, as it is ok for them to be significantly underestimated since selecting them still allows the policy to be $\epsilon$-optimal.
\end{proof}

\begin{figure*}[t]
\centerline{\includegraphics[width=1.8\columnwidth]{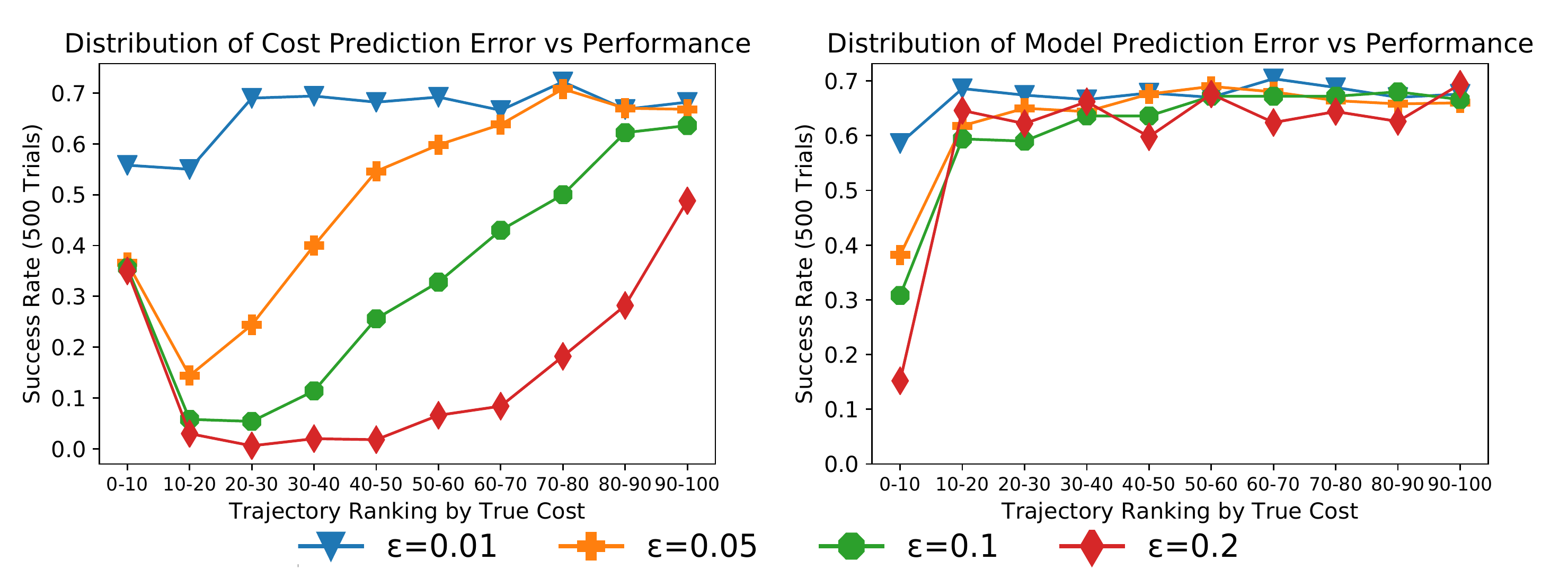}}
\vspace{-0.3cm}
\caption{\small{\textbf{Distribution of model errors vs. performance}: We validate how the distribution of model errors affects performance on a simple 2D navigation domain, by adding noise to cost predictions \textbf{(left)} or model predictions \textbf{(right)}. We add varying amounts of noise with magnitude up to $\varepsilon$ to the predictions of the 10 lowest true cost trajectories \textbf{(0-10)} to the 10 highest true cost trajectories \textbf{(90-100)}. We observe that adding noise to low true cost trajectories dramatically reduces performance, while adding noise to the high true cost trajectories has no nearly no impact on performance. 
}}
\vspace{-0.3cm}
\label{objm}
\end{figure*}

Theorem \ref{thrm} suggests that, for good task performance, model error must be low for good trajectories, and we can afford higher model error for trajectories with higher cost. That is, \textbf{the greater the trajectory cost, the more model error we can afford}.  Specifically, we see that the allowable error bound on cost of an action sequence from a learned model scales linearly with how far from optimal that action sequence is, in order to maintain the optimal policy for the downstream task. Note, that while Theorem \ref{thrm} relates cost prediction error (not explicitly dynamics prediction error) to planning performance, we can expect dynamics prediction error to relate to the resulting cost prediction error. We also verify this empirically in the next section.

\subsection{Verifying Theorem~\ref{thrm} Experimentally}
\label{sec:navigation}

We now verify the above analysis through a controlled study of how prediction error affects task performance. To do so, we will use the true model of an environment and true cost of an action sequence for planning, but will artificially add noise to the cost/model predictions to generate model error. 

Consider a 2 dimensional navigation task, where the agent is initialized at $s_0 = [0.5, 0.5]$
and is randomly assigned a goal $s_g \in [0,1]^2$.
Assume we have access to the underlying model of the environment,
and cost defined as $\mathcal{C}(s_t, s_g) = ||s_t-s_g||_2$. 
We can run the policy described in Section \ref{sec:theory}, specifically sampling $N=100$ action sequences,
and selecting the one with lowest predicted cost, where we consider 2 cases: (1) predicted cost is using the true model, but with noise $\alpha$ added to the true cost $\hat{c_i} = c_i^* + \alpha$ of some subset of action sequences, and (2) predicted cost is true cost, but with noise $\alpha$ added to the model predictions $s_{t+1} = \bar{s}_{t+1}+ \alpha$ where $\bar{s}_{t+1}\sim p(s_{t+1}|s_t,a_t)$ of some subset of action sequences. The first case relates directly to Theorem~\ref{thrm}, while the second case relates to what we can control when training a self-supervised dynamics model. When selectively adding noise, we will use uniform noise $\alpha \sim \mathcal{U}(-\varepsilon, \varepsilon)$. 
We specifically study the difference in task performance when adding noise $\alpha$ to model predictions for the first $10\%$ of trajectories with lowest true cost, the second $10\%$ lowest true cost trajectories, etc., up to the $10\%$ of trajectories with highest true cost. Here ``true cost'' refers to the cost of the action sequence under the true model and cost function without noise. For each noise augmented model we measure the task performance, specifically the success rate (reaching within 0.1 of the goal), over 500 random trials.

\begin{figure*}[t]
\centerline{\includegraphics[width=1.99\columnwidth]{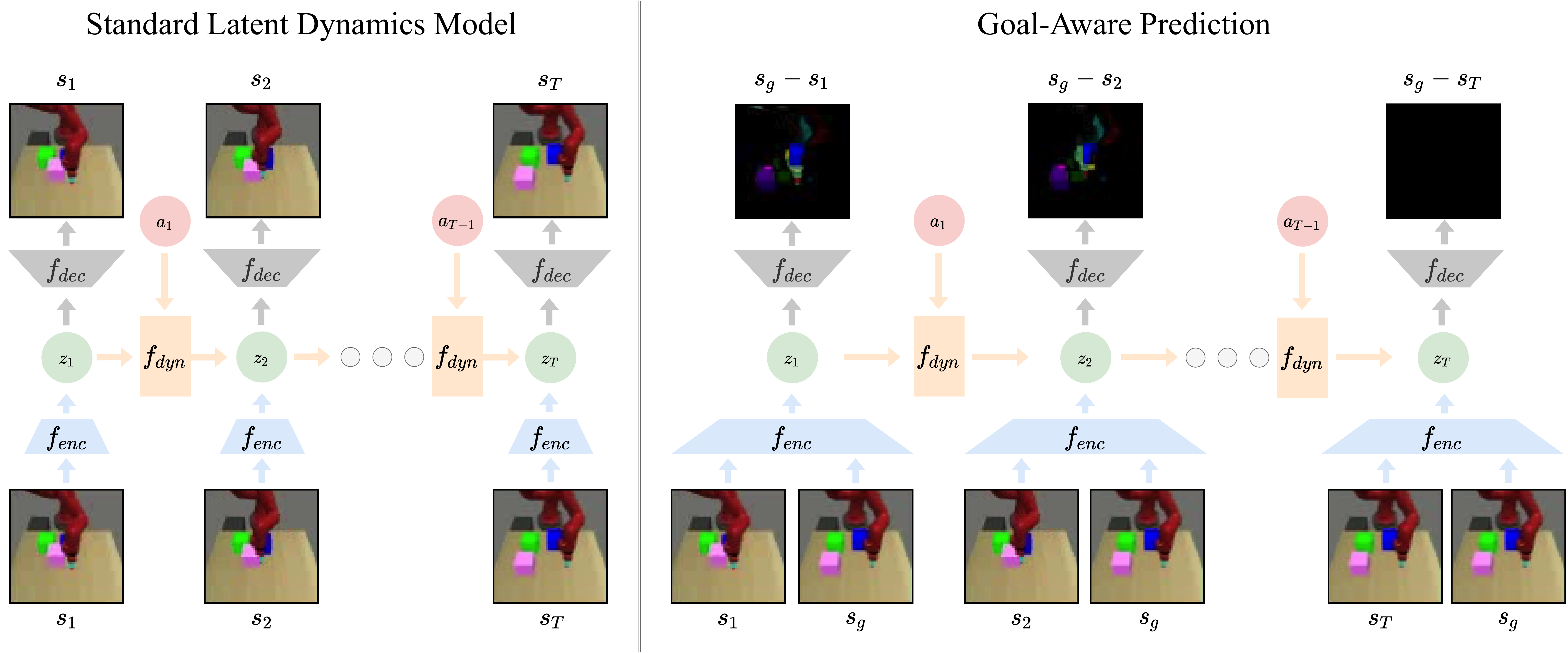}}
\vspace{-0.3cm}
\caption{\small{\textbf{Goal-Aware Prediction:} Compared to a standard latent dynamics model \textbf{(left)}, our proposed method, goal-aware prediction (GAP), \textbf{(right)} encodes both the current state $s_t$ and goal $s_g$ into a single latent space $z_t$. Samples from the distribution of $z_t$ are then used to reconstruct the residual between the current state and goal $s_g - s_t$. Simultaneously, we learn the forward dynamics in the latent space $z$, specifically, learning to predict $z_{t+1}$ from $z_t$ and $a_t$. Using this approach, we obtain 2 favorable properties: (1) the latent space only needs to capture components of the scene relevant to the goal, and (2) the prediction task becomes easier (the residual approaches 0) for states closer to the goal. }}
\vspace{-0.3cm}
\label{overview}
\end{figure*}

We see in Figure \ref{objm} that for multiple values of noise $\varepsilon$, when adding noise to the better (lower true cost) trajectories we see a significant drop in task performance, while when adding noise to the worse (higher true cost) trajectories task performance remains relatively unchanged (except for the case with very large $\varepsilon$). In particular, we notice that when adding noise to cost predictions, performance scales almost linearly as we add noise to worse trajectories.
Note there is one exception to this trend: if we add noise only to the top $10\%$ of trajectories, performance is not optimal, but reasonable because the best few trajectories will occasionally be assigned a lower cost under the noise model.

In the case of model error, we see a much steeper increase in performance, where adding model error to the best 10 trajectories significantly hurts performance, while adding to the others does not. This is because, in this environment, noise added to model predictions generally makes the cost of those predictions worse; so if no noise is added to the best trajectories, the best action sequence is still likely to be selected. The exact relationship between model prediction error and cost prediction error depends on the domain and task. But, we can see that in both cases in Figure~\ref{objm}, the conclusion from Theorem~\ref{thrm} holds true: accuracy on good action sequences matters much more than accuracy on bad action sequences.

\subsection{Redistributing Model Errors with Goal Aware Prediction}
\label{sec:gap}

Our analysis above suggests that distributing errors uniformly across action sequences will not lead to good task performance. Yet, standard model learning objectives will encourage just that. In this section, we aim to change our model learning approach in an aim to redistribute errors more appropriately.

Ideally, we would like to encourage the model to have more accurate predictions on the trajectories which are relevant to the task. However, actually identifying how relevant a trajectory is to a specific goal $s_g$ can be challenging.

One potential approach to doing this would be to re-weight the training loss of the model on transitions $p_\theta(s_{t+1}|s_t, a_t)$ \textit{inversely} by the
cost $\mathcal{C}(s_t, s_g)$, such that low cost trajectories are weighted more heavily. While this simple approach may work when the cost function  $\mathcal{C}(s_t, s_g)$ is accurate, the distance metric $\mathcal{C}(s_t, s_g)$ for high-dimensional states is often sparse and not particularly meaningful; when states are images, $\mathcal{C}$ amounts to $\ell_2$ distance in pixel space.

An alternative way of approaching this problem is, rather than focusing on how to re-weight the model's predictions, instead ask, ``what exactly should the model be predicting?'' If the downstream task involves using the model to plan to reach a goal state, then intuitively the model should only need to focus on predicting goal relevant parts of the scene. Moreover, if the model is trained to focus on parts of the scene relevant to the goal, it will naturally be biased towards higher accuracy in states relevant to the task, re-distributing model error favourably for downstream performance.

To that end, we propose goal-aware prediction (GAP) as a technique to re-distribute model error
by learning a model that,
in addition to the current state and action, $s_t$ and $a_t$, is conditioned on the goal state $s_g$, and instead of reconstructing the future state $s_{t+1}$, reconstructs the difference between the future state and the goal state, that is: $p_\theta( (s_g - s_{t+1}) | s_t, s_g, a_t)$. Critically, to train GAP effectively, we need action sequences that are relevant to the corresponding goal. To accomplish this, we can choose to set the goal state for a given action sequence as the final state of that trajectory, i.e. using hindsight relabeling \cite{her}.
Specifically, given a trajectory $[(s_1, a_1), (s_2, a_2), ..., (s_T)]$, the goal is assigned to be the last state in the trajectory $s_g = s_T$, and for all states $\{s_{t} | 1 \leq t \leq T-1\}  $, $p_\theta(s_t, s_g, a_{t})$ is trained to reconstruct the delta to the goal $s_g - s_{t+1}$.

Our proposed GAP method has two clear benefits over standard dynamics models. First, assuming that the agent is not in a highly dynamic scene with significant background distractor motion, by modeling the delta between $s_g$ and $s_t$, $p_\theta$ only needs to model components of the state which are relevant to the current goal. This is particularly important in high dimensional settings where there may be large components of the state which are irrelevant to the task, and need not be modeled.
Second, states $s_t$ that are temporally close to the goal state $s_g$ will have a smaller delta $s_g - s_t$, approaching zero along the trajectory until $s_t=s_g$. As a result, states closer to the goal will be easier to predict, biasing the model towards low error near states relevant to the goal. In light of our analysis of model error in the previous sections, we hypothesize that this model will lead to better downstream task performance compared to a standard model that distributes errors uniformly across trajectories.

\textit{When do we expect GAP to improve performance on downstream tasks?} We expect GAP to be most effective when the goals involve changing a subset $d < D$ state dimensions from the initial states $s_t \in \mathbb{R}_D$. Under these conditions, GAP only needs to predict the dynamics of the $d$ dimensions, while standard latent dynamics need to predicts all $D$ making GAP an easier problem. 
\section{Implementing Goal-Aware Prediction}
\label{sec:implem}

We implement GAP with a latent dynamics model, as shown in Figure \ref{overview}. Given a dataset of trajectories $[\tau_1, ..., \tau_N]$, we sample sequences of states $[(s_1, a_1), ..., (s_T)]$ where we re-label goal for the trajectory as $s_g = s_T$. 

The GAP model consists of three components, (1) an encoder $f_{enc}(z_t | s_t, s_g; \theta_{enc})$ that encodes the state $s_t$ and goal $s_g$ into a latent space $z_t$, (2) a decoder $f_{dec} (s_g - s_t | z_t;\theta_{dec}) $ that decodes samples from the latent distribution into $s_g-s_t$, and (3) a forward dynamics model in the latent space $f_{dyn} (z_{t+1} | z_t, a_t ; \theta_{dyn})$ which learns to predict the future latent distribution over $z_{t+1}$ from $z_t$ and action $a_t$. In our experiments we work in the setting where states are images, so $f_{enc}(z_t | s_t, s_g)$ and $f_{dec} (s_g - s_t | z_t) $ are convolutional neural networks, and $f_{dyn} (z_{t+1} | z_t, a_t)$ is a fully-connected network. The full set of parameters $\theta = \{\theta_{enc}, \theta_{dec}, \theta_{dyn}\}$ are jointly optimized. Exact architecture and training details for all modules can be found in the supplement.
Following prior works~\cite{Finn2016UnsupervisedLF, DBLP:journals/corr/FinnL16, awarenessmodels_amos}, we train for multi-step prediction. More specifically, given $s_t, a_{t:t+H}, s_g$, the model is trained to reconstruct $(s_g - s_t), ..., (s_g - s_{t+H})$, shown in Figure~\ref{overview}. 

\textbf{Data Collection and Model Training:}
In our \textit{self-supervised} setting, data collection simply corresponds to rolling out a random exploration policy in the environment. Specifically, we sample uniformly from the agent's action space, and collect 2000 episodes, each of length 50, for a total of 100,000 frames of data.  

During training, sub-trajectories of length 30 time steps are sampled from the data set, with the last timestep labeled as the goal $s_g=s_{30}$. Depending on the current value of $H$, loss is computed over $H$ step predictions starting from states $s_{t:(t+H)}$. 
We use a curriculum when training all models, where $H$ starts at 0, and is incremented by 1 every 50,000 training iterations. All models are trained to convergence, 
for about $300,000$ iterations on the same dataset.

\textbf{Planning with GAP:}
For all trained models, when given a new goal at test time $s_g$, we plan using model predictive control (MPC) in the latent space of the model. 
Specifically, both the current state $s_t$ and $s_g$ are encoded into their respective latent spaces $z_t$ and $z_g$ (Algorithm 1, Line 3). 

\begin{algorithm}[H]
\caption{$\text{Latent MPC}(f_{enc}, f_{dyn}, s_t, s_g)$}
\begin{algorithmic}[1]
\small{
\STATE Let $D=1000, D^*=10, H=15$
\STATE Receive current state $s_t$ and goal state $s_g$
\STATE Encode $z_t \sim f_{enc}(s_t, s_g), z_g \sim f_{enc}(s_g, s_g)$
\STATE Initialize $N(\mu, \sigma^2) = \mathcal{N}(0, 1)$
\STATE Let cost function $C(z_i,z_j) =||z_i-z_j||_2^2 $
\WHILE{\text{iterations} $\leq 3$}
    \STATE{$a_{t:H}^1, ..., a_{t:H}^D \sim N(\mu, \sigma^2)$}
    \STATE{$z_{t+1:t+H}^1, ..., z_{t+1:t+H}^D \sim f_{dyn}(z_t, a_{t:H}^1, ..., a_{t:H}^D)$}
    \STATE{$\hat{c}^1, ..., \hat{c}^D = [\sum_{h=1}^H \mathcal{C}(z_{t+h}^1, z_g), ..., \sum_{h=1}^H \mathcal{C}(z_{t+h}^D, z_g)] $}
    \STATE{$a_{sorted} = Sort([a_{t:H}^1, ..., a_{t:H}^D])$} by $\hat{c}$
    \STATE{Refit $\mu, \sigma^2$ to $a_{sorted}[1:D^*]$}
\ENDWHILE
\STATE Return $\hat{c}_{sorted}[1]$, $a_{sorted}[1]$}
\end{algorithmic}
\end{algorithm}
\vspace{-0.5cm}
Then using the model $f_{dyn} (z_{t+1} | z_t, a_t)$, the agent plans a sequence of $H$ actions to minimize cost $\sum_{h =0}^H ||z_g-z_{t+h}||_2^2$ (Algorithm 1, Lines 4-11). Following prior works \cite{DBLP:journals/corr/FinnL16, DBLP:journals/corr/abs-1811-04551hafner}, we use the cross-entropy method \cite{cem} as the planning optimizer. 
Finally, the best sequence of actions is returned and executed in the environment (Algorithm 1, Line 13).

While executing the plan, our model re-plans every $H$ timesteps. That is, it starts at state $s_t$, uses Latent MPC (Algorithm 1) to first plan a sequence of $H$ actions, executes them in the environment resulting in a state $s_{t+H}$, then re-plans an additional $H$ actions, and executes them resulting in a final state $s_T$. Success is computed based the difference between $s_T$ and $s_g$.

\section{Experiments}
\label{sec:experiments}

\begin{figure*}[t]
\centerline{\includegraphics[width=1.99\columnwidth]{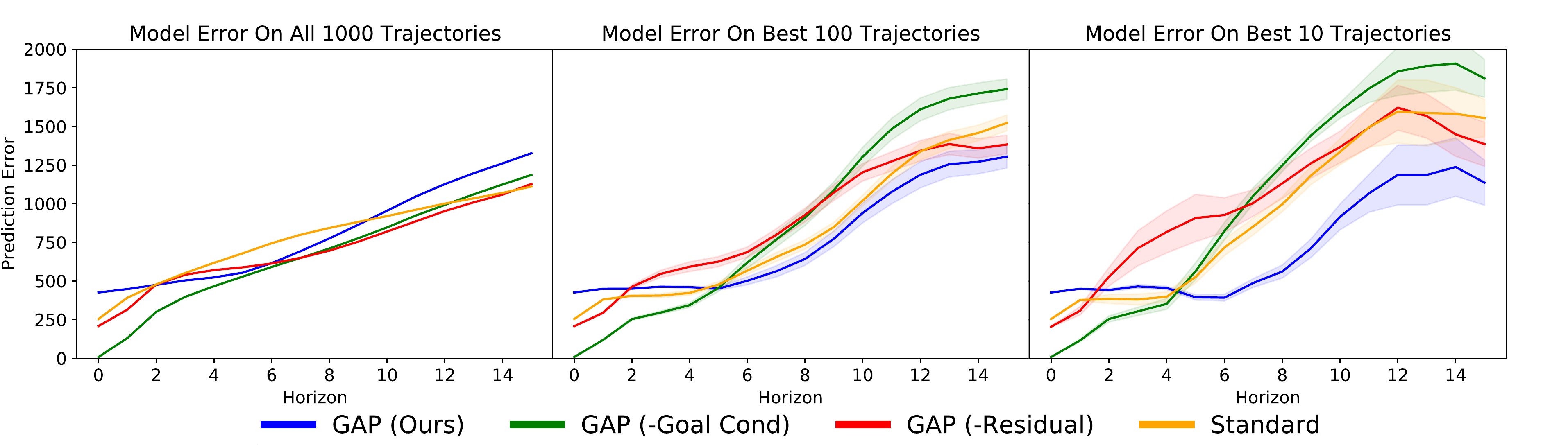}}
\vspace{-0.3cm}
\caption{\small{\textbf{Distribution of Model Errors:} We examine the distribution of model prediction errors of GAP compared to prior methods over 1000 random action sequences, evaluated on the ``Task 1'' domain. The y-axis are corresponds to model mean-squared error (with standard error bars), and the x-axis corresponds to number of time steps predicted forward.
Naturally, we observe that model error increases as the prediction horizon increases, for all approaches. However, although all approaches have a similar error over all 1000 action sequences (left), GAP achieves significantly lower error on the \textit{best} 10 trajectories (right). This suggests that changing the model objective through predicting the goal-state residual leads to more accurate predictions in areas that matter in downstream tasks.}}
\vspace{-0.3cm}
\label{modelerr}
\end{figure*}

In our experiments, we investigate three primary questions 

\textbf{(1)} Does using our proposed technique for goal-aware prediction (GAP) re-distribute model error such that predictions are more accurate on good trajectories?

\textbf{(2)} Does re-distributing model errors using GAP result in better performance in downstream tasks?

\textbf{(3)} Can GAP be combined with large video prediction models to scale to the complexity of real world images?

We design our experimental set-up with these questions in mind in Section \ref{sec:exp_overview}, then examine each of the questions in Sections \ref{model_error}, \ref{task_perf}, and \ref{sec:robonet} respectively. 

\begin{figure}[b]
\vspace{-0.3cm}
\centerline{\includegraphics[width=0.9\columnwidth]{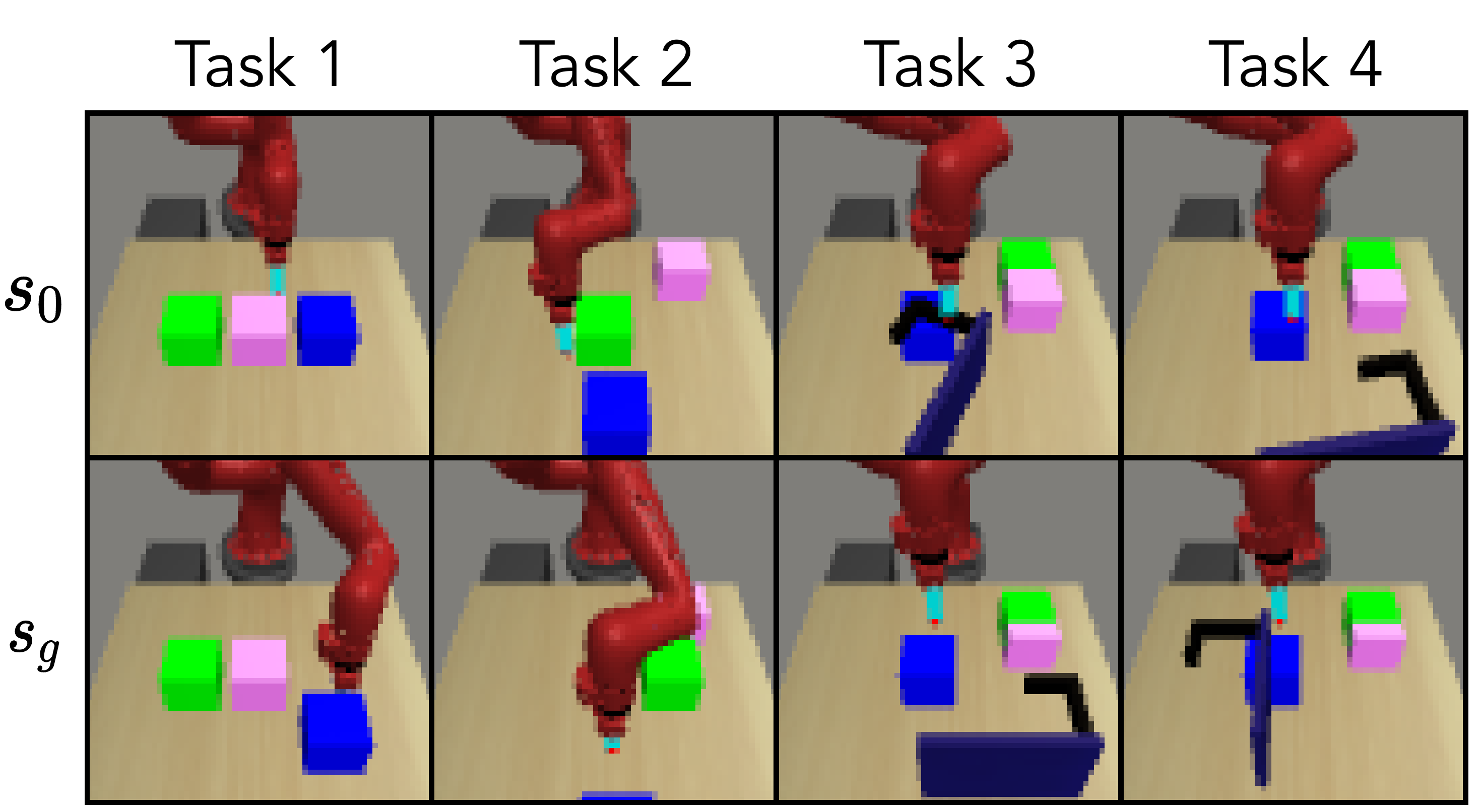}}
\vspace{-0.3cm}
\caption{\small{\textbf{Evaluation Tasks:} Sample initial \& goal states for each of the simulated manipulation tasks. Tasks involve manipulating blocks or a door, with the task specified by a goal image.}}
\vspace{-0.3cm}
\label{tasks}
\end{figure}

\subsection{Experimental Domains and Comparisons}
\label{sec:exp_overview}

\textbf{Experimental Domains}: Our primary experimental domain is a simulated tabletop manipulation task built off of the Meta-World  suite of environments \cite{Yu2019MetaWorldAB}.
Specifically, it consists of a simulated Sawyer robot, and 3 blocks on a tabletop. In the \textit{self-supervised} data collection phase, the agent executes a random policy for 2,000 episodes, collecting 100,000 frames worth of data. Then, after learning a model, the agent is tested on 4 previously unseen tasks, where the task is specified by a goal image.

\textbf{Task 1} consists of pushing the green, pink, or blue block to a goal position,
while the more challenging \textbf{Task 2} requires the robot to push 2 blocks to each of their respective goal positions (see Figure \ref{tasks}). Task success is defined as being within 0.1
of the goal positions. \textbf{Task 3 and 4} involve closing and opening the door respectively with distractor objects on the table, where success is defined as being within $\pi / 6$ radians of the goal position. The agent receives 64$\times$64 RGB camera observations of the tabletop. We also study model error  on real robot data from the BAIR Robot Dataset \cite{ebertskip} and RoboNet dataset \cite{Dasari2019RoboNetLM}
in Section \ref{sec:robonet}.

\textbf{Comparisons:} We compare to several model variants in our experiments.
\textbf{GAP} is our approach of learning dynamics in a latent space conditioned on the current state and goal, and reconstructing the residual between the current state and goal state, as described in Section \ref{sec:gap}.
\textbf{GAP (-Goal Cond)} is an ablation of GAP that does not use goal conditioning. Instead of conditioning on the goal and predicting the residual to the goal, it is conditioned on the initial state, and predicts the residual to the initial state. This is representative of prior works (e.g. \citet{Nagabandi2019DeepDM}) that predict residuals for model-based RL. 
\textbf{GAP (-Residual)} is another ablation of GAP that is conditioned on the goal but maintains the standard reconstruction objective instead of the residual. This is similar to prior work on goal conditioned video prediction \cite{rybkin2020goalconditioned}.
\textbf{Standard} refers to a standard latent dynamics model, representative of approaches such as PlaNet \cite{DBLP:journals/corr/abs-1811-04551hafner}, but without reward prediction since we are in the self-supervised setting.

When studying task performance, we also compare to two alternative self-supervised reinforcement learning approaches. First, we compare to an
\textbf{Inverse Model}, which is a latent dynamics model where the latent space is learned via an action prediction loss (instead of image reconstruction),
as done in \citet{Pathak2017CuriosityDrivenEB}. Second, we compare to a model-free approach: reinforcement learning with imagined goals (\textbf{RIG}) \cite{ashvinnairRIG}, where we train a VAE on the same pre-collected dataset as the other models, then train a policy in the latent space of the VAE to reach goals sampled from the VAE. 
Further implementation details can be found in the supplement.

\subsection{Experiment 1: Does GAP Favorably Redistribute Model Error?}
\label{model_error}

\begin{figure}[t]
\centerline{\includegraphics[width=0.99\columnwidth]{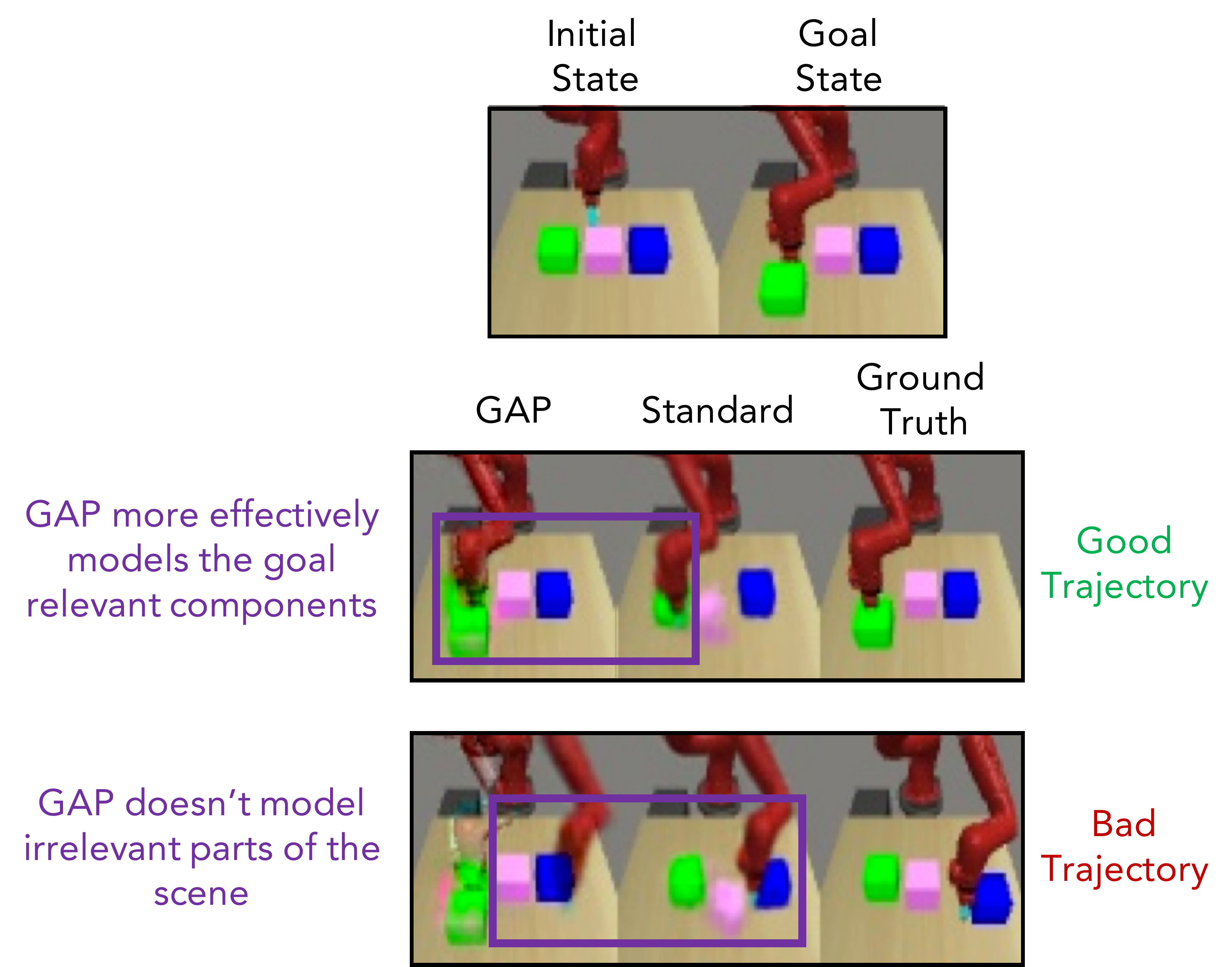}}
\vspace{-0.3cm}
\caption{\small{\textbf{GAP Predictions on Good/Bad Trajectories.} Here we show qualitatively how GAP focuses on the task relevant parts of the scene. Note, for GAP predictions we add back the goal image to the predicted goal-state residual. Given the task specified by pushing the green block \textbf{(top)}, consider a good action sequence \textbf{(middle)} and bad action sequence \textbf{(bottom)}. On the good action sequence GAP more effectively models the goal relevant parts of the scene (the green block) than the standard model. Additionally, on the bad trajectory, GAP ignores the irrelevant objects and does not model their dynamics at all, while the standard model does.  }}
\vspace{-0.3cm}
\label{qualsim}
\end{figure}

\begin{figure}[b]
\vspace{-0.4cm}
\centerline{\includegraphics[width=0.99\columnwidth]{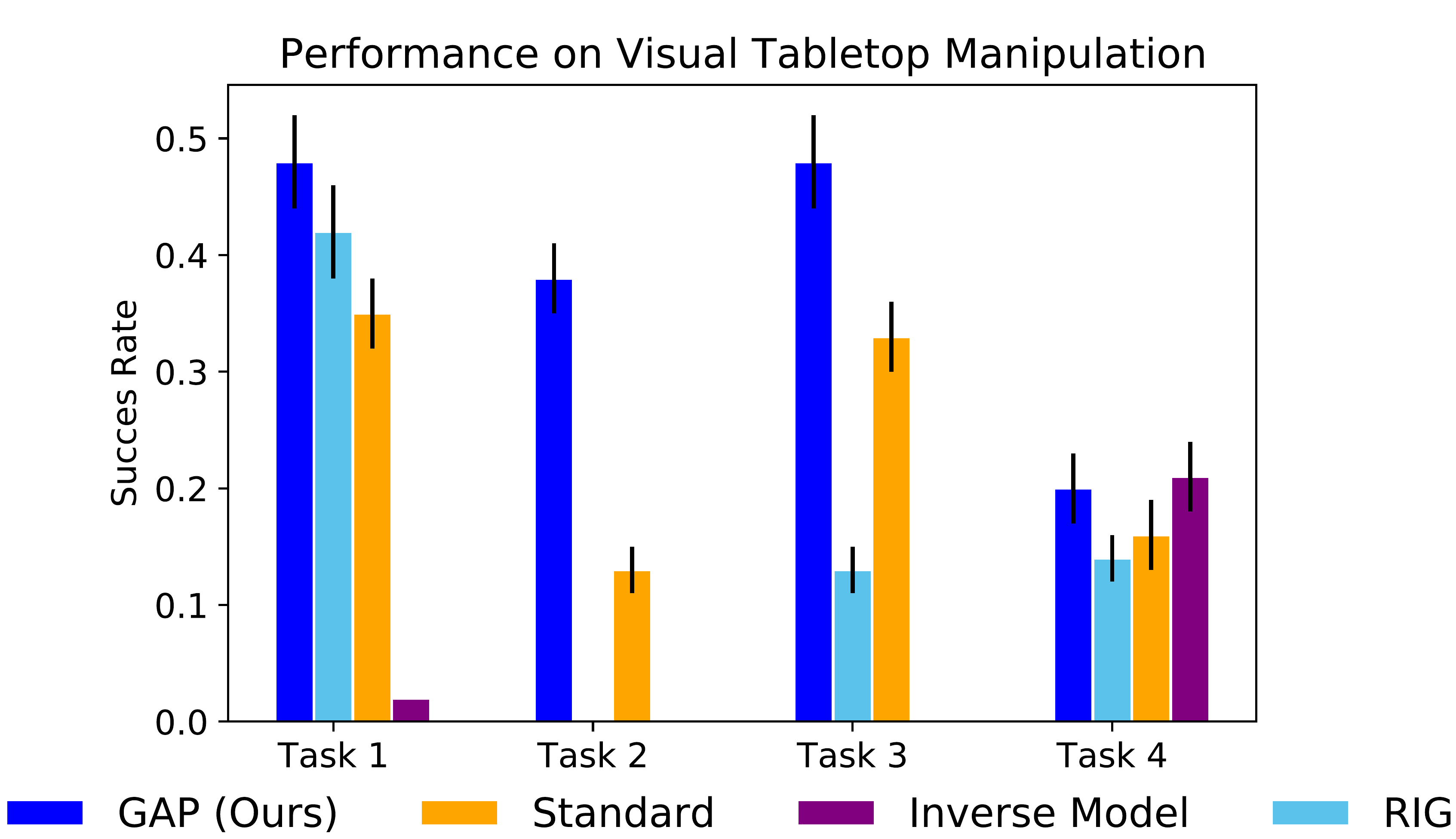}}
\vspace{-0.3cm}
\caption{\small{\textbf{Success rate on tabletop manipulation.} 
 On the tasks proposed in Section \ref{sec:exp_overview}, we find that GAP outperforms the comparisons. Specifically on the harder 2 block manipulation task, GAP has a significantly higher success rate.
  } }
\vspace{-0.2cm}
\label{task_perf_comp}
\end{figure}

\begin{figure}[t]
\centerline{\includegraphics[width=0.93\columnwidth]{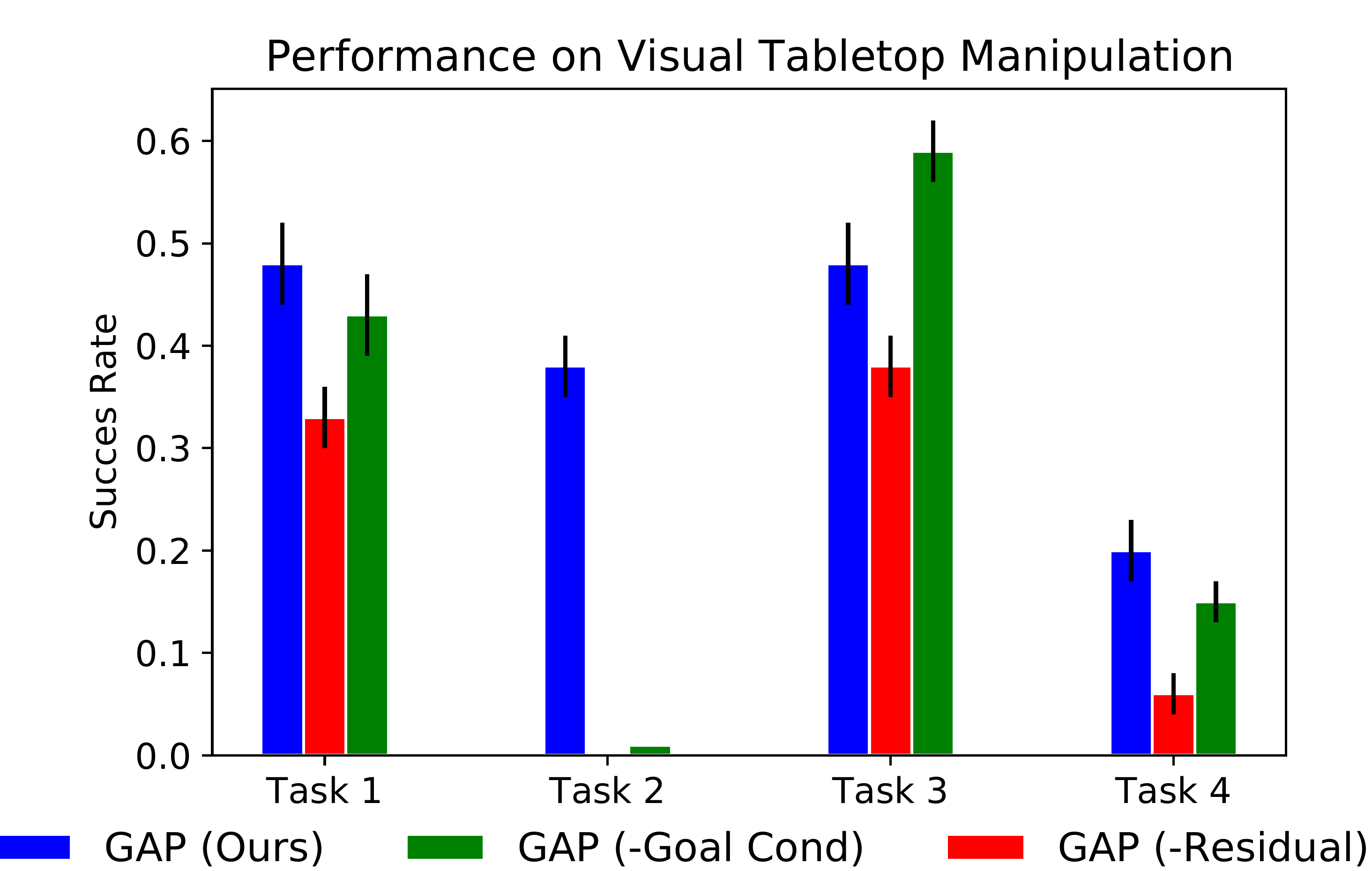}}
\vspace{-0.3cm}
\caption{\small{\textbf{Success rate on tabletop manipulation (ablation).} 
 We compare the success rate of GAP to ablations on the task described in Section \ref{sec:exp_overview}. We find that in all tasks except Task 3 both goal conditioning and residual are important for good performance. }
  } 
\vspace{-0.6cm}
\label{task_perf_ablation}
\end{figure}

In our first set of experiments, we study how GAP affects the distribution of model errors, and if it leads to lower model error on task relevant trajectories.
We sample 1000 random action sequences of length 15 in the Task 1 domain. We compute the true next states $s_{1:H}^1, ..., s_{1:H}^{1000}$ and costs $c^1, ..., c^{1000}$ for each action sequence by feeding it through the true simulation environment. We then get the predicted next states from our learned models, including GAP as well the comparisons outlined above. We then examine the model error of each approach, and how it changes when looking at all trajectories, versus the lowest cost trajectories.

We present our analysis in Figure \ref{modelerr}. We specifically look at the model error on all 1000 action sequences, the top 100 action sequences, and the top 10 action sequences. First, we observe that model error increases with the prediction horizon, which is expected due to compounding model error. More interestingly, however, we observe that while our proposed GAP approach has the highest error averaged across all 1000 action sequences, it has by far the lowest error on the top 10. This suggests that the goal conditioned prediction of the goal-state residual indeed encourages low model error in the relevant parts of the state space. Furthermore, we see that the conditioning on and reconstructing the difference to the \textit{actual goal} is in fact critical,
as the ablation GAP (-Goal Cond) which instead is conditioned on and predicts the residual to the first frame actually gets worse error on the lowest cost trajectories.

This indicates that GAP successfully re-distributes error  such that it has the most accurate predictions on low-cost trajectories. We also observe this qualitatively in Figure \ref{qualsim}. For a given initial state and goal state from Task 1,  GAP effectively models the target object (the green block) on a good action sequence that reaches the goal, while the standard model struggles. On a poor action sequence that hits the non-target blocks,  the Standard approach models them, while \textbf{GAP does not model interaction with these blocks at all,} suggesting that GAP does not model irrelevant parts of the scene. 
In the next section, we examine if this error redistribution translates to better task performance.

\begin{figure*}[t]
\centerline{\includegraphics[width=1.99\columnwidth]{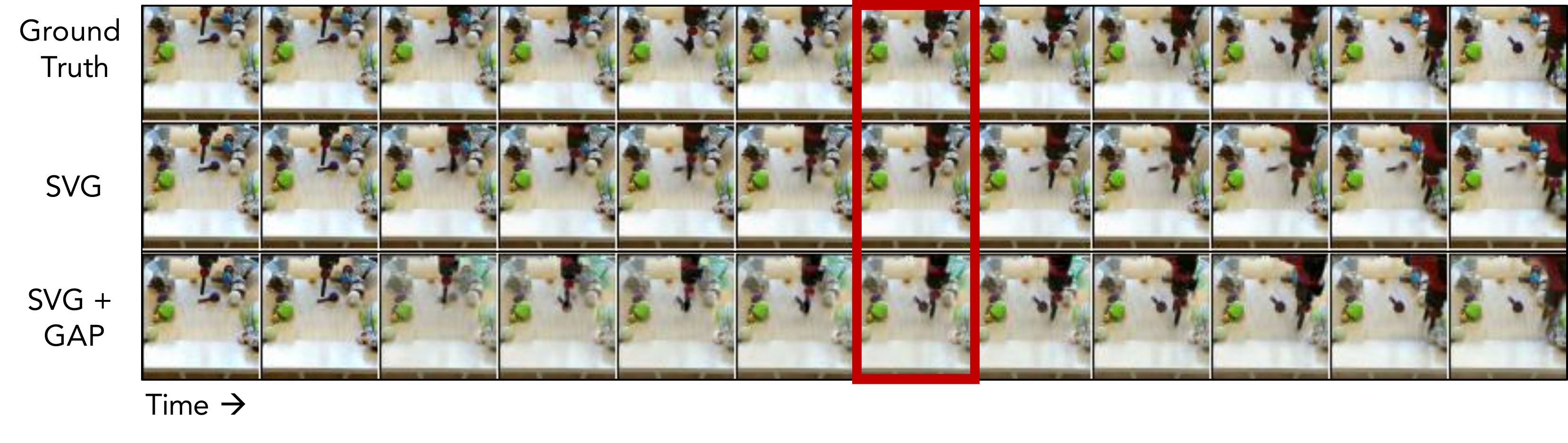}}
\vspace{-0.3cm}
\caption{\small{\textbf{GAP+SVG Video Prediction (BAIR Robot Dataset):} Here we present qualitative examples of action-conditioned SVG with and without GAP on the BAIR robot dataset, predicting on goal-reaching trajectories. Note, in the GAP predictions the goal is added back to the predicted goal-state residual. In this case the goal is the rightmost frame. We see that GAP is able to more accurately predict the objects relevant to the goal, for example the small spoon highlighted in the red box.}   }
\vspace{-0.2cm}
\label{qual_bair}
\end{figure*}

\subsection{Experiment 2: Does GAP Lead to Better Downstream Task Performance?}
\label{task_perf}

To study downstream task performance, we test on the tabletop manipulation tasks described in Section \ref{sec:exp_overview}. We perform planning over 30 timesteps with the learned models as described in Section \ref{sec:method},
and report the final success rate of each task over 200 trials in Figure \ref{task_perf_comp}. We see that in all tasks GAP outperforms the comparisons, especially in the most challenging 2 block manipulation task in Task 2 (where precise modeling of the relevant objects is especially important).

We make a similar comparison, but to the ablations of GAP, in Figure \ref{task_perf_ablation}. Again we see that GAP is performant, and in all tasks except Task 3 both goal conditioning and residual are important for good performance. Interestingly, we observe that GAP (-Goal Cond) is competitive on the door manipulation tasks.

Hence, we can conclude that GAP not only enables lower model error in task relevant states, but by doing so, also achieves a  10-20\% absolute performance improvement over baselines on 3 out of 4 tasks.

\subsection{Experiment 3: Does GAP scale to real, cluttered visual scenes?}
\label{sec:robonet}

Lastly, we study whether our proposed GAP method extends to real, cluttered visual scenes. To do so we combine it with an action-conditioned version of the video generation model, SVG \cite{Denton2018StochasticVG}. Specifically, we condition the SVG encoder on the goal, and the current goal-state residual, and predict the next goal-state residual.

We compare the prediction error of SVG to SVG+GAP on goal reaching trajectories (Figure \ref{robonet_modelerr}) from real robot datasets, namely the BAIR Robot Dataset \cite{ebertskip} and the RoboNet Dataset \cite{Dasari2019RoboNetLM}. We see that action-conditioned SVG combined with GAP as well as the ablation without residual prediction both have lower prediction error on goal reaching trajectories than standard action-conditioned SVG.

Qualitatively, we also observe that SVG+GAP is able to more effectively capture goal relevant components, as shown in Figure \ref{qual_bair}. We see that GAP is able to capture the motion of the small spoon, while correctly modeling the dynamics of the arm, while SVG ignores the spoon.

As a result, we conclude that GAP can effectively be combined with large video prediction models, and scaled to challenging real visual scenes.

\begin{figure}[t]
\centerline{\includegraphics[width=0.99\columnwidth]{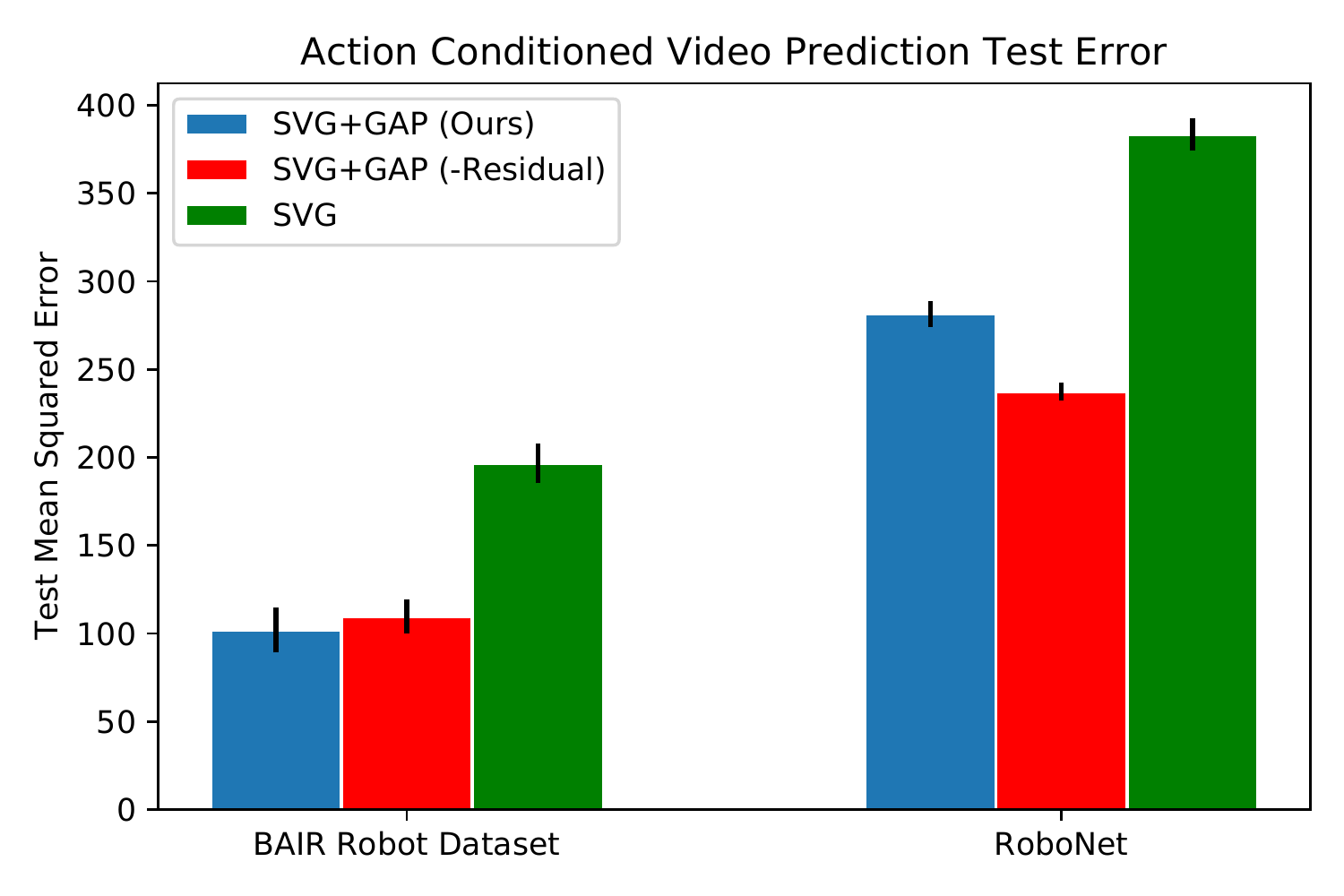}}
\vspace{-0.3cm}
\caption{\small{\textbf{Model Errors (Real Robot Data):} We examine the model error of SVG combined with GAP on \textbf{unseen, goal-reaching trajectories} from two real robot datasets (the BAIR Dataset \cite{ebertskip} and the RoboNet Dataset \cite{Dasari2019RoboNetLM}). We see that action-conditioned SVG combined with GAP has lower prediction error on the goal reaching trajectories than standard action-conditioned SVG. We observe that the GAP ablation which also conditions on the goals, but predicts residuals is equally effective in this setting. 
}}
\vspace{-0.2cm}
\label{robonet_modelerr}
\end{figure}

\section{Discussion and Limitations}

In this paper, we studied the role of model error in task performance. Motivated by our analysis, we proposed goal-aware prediction, a self-supervised framework for learning dynamics that are conditioned on a goal and structured in a way that favorably re-distributes model error to be low in goal-relevant states. In visual control domains, we verified that GAP (1) enables lower model error on task relevant states, (2) improves downstream task performance, and (3) scales to real, cluttered visual scenes.

While GAP demonstrated significant gains, multiple limitations and open questions remain. Our theoretical analysis suggests that we should re-distribute model errors according to their planning cost. While GAP provides one way to do that in a \emph{self-supervised} manner, there are likely many other approaches that can be informed by our analysis, including approaches that leverage human supervision. For example, we anticipate that GAP-based models may be less suitable for environments with dynamic distractors such as changing lighting conditions and moving distractor objects, since GAP would be still encouraged to model these events. To effectively solve this case, an agent would likely require human supervision to indicate the axes of variation that are relevant to the goal. Incorporating such supervision is outside the scope of this work, but an exciting avenue for future investigation.

Additionally, while in this work we found GAP to work well with goals selected at the end of sampled trajectories, there may be more effective ways to sample goals.
Studying the relationship between how exactly goals are sampled and learning performance, as well as how best to sample and re-label goals is an exciting direction for future work.

\section*{Acknowledgments}

We would like to thank Ashwin Balakrishna, Oleh Rybkin, and members of the IRIS lab for many valuable discussions. This work was supported in part by Schmidt Futures and an NSF graduate fellowship. Chelsea Finn is a CIFAR Fellow in the Learning in Machines \& Brains program.

\bibliography{references}

\appendix
\section{Method Implementation Details}

In this section we go over implementation details for our method as well as our comparisons. 

\subsection{Architecture Details}

\textbf{Block/Door Domain:} All comparisons leverage a nearly identical architecture, and are trained on an Nvidia 2080 RTX.
In the block pushing domain input observations are [64,64, 6] in the case of our model (GAP), as well as the ablations, and [64,64, 3] in the case of Standard.

All use an encoder $f_{enc}$ with convolutional layers (channels, kernel size, stride): [(32, 4, 2), (32, 3,1), (64, 4, 2), (64, 3,1), (128, 4, 2), (128, 3,1), (256, 4, 2), (256, 3,1)] followed by fully connected layers of size [512, 2 $\times L$] where $L$ is the size of the latent space (mean and variance). All layers except the final are followed by ReLU activation.

The decoder $f_{dec}$ takes a sample from the latent space of size $L$, then is fed through fully connected layers [128, 128, 128], followed by de-convolutional layers (channels, kernel size, stride): [(128, 5, 2), (64, 5, 2), (32, 6, 2), (3, 6,2)]. All layers except the final are followed bu ReLu activation, except the last layer which is a Sigmoid in the case of Standard, and GAP (-Residual).

For all models the dynamics model $f_{dyn}$ are a fully connected network with layers [128, 128, 128, $L$], followed by ReLU activation except the final layer.

The inverse model baseline utilizes the same $f_{enc}$ and $f_{dyn}$ as above, but $f_{dec}$ is instead a fully connected network of size [128, 128, action size] where action size is 4 (corresponding to delta x,y, z motion and gripper control). All layers except the final are followed by ReLU activation. 

Lastly, the RIG \cite{ashvinnairRIG} baseline uses a VAE with identical $f_{enc}$ and $f_{dec}$ to the standard approach above, except learns a policy in the latent space. The policy architecture used is the default SAC \cite{Haarnoja2018SoftAO} from RLkit, namely 2 layer MLPs of size 256. 

\textbf{SVG+GAP:} In all SVG \cite{Denton2018StochasticVG} based experiments on real robot data, the architecture used is identical to the SVG architecture as described in official repo\footnote{\url{https://github.com/edenton/svg}} with the VGG encoder/decoder. All BAIR dataset experiments take as input sequences of 2 frames and predict 10 frames, while all RoboNet experiments take as input 2 frames and predict 20 frames. The latent dimension is 64, and the encoder output dimension is 128. All models are trained with batch size 32.

\subsection{Training Details}

\textbf{Block/Door Domain:} We collect a dataset of 2,000 trajectories, each 50 timesteps with a random policy. All models are trained on this dataset to convergence for roughly 300,000 iterations. All models are trained with learning rate of 1e-4, and batch size 32. 

The RIG baseline is trained using the default SAC example parameters in RLkit, for an additional 3 million steps.

\textbf{BAIR Robot Dataset:} We train on the BAIR Robot Dataset \cite{ebertskip} as done in the original SVG paper, except with action conditioning. 

\textbf{RoboNet:} We train on the subset of the RoboNet dataset which considers only the sawyer arm and the front facing camera view, and use a random 80/20 train test split.

\subsection{Task/Evaluation Details}

\textbf{Tasks}. All tasks are defined in a Mujoco simulation built off the Meta-World environments \cite{Yu2019MetaWorldAB}. In Task 1, the agent must push a single block to a target position, as specified by a goal image. The task involves either pushing the pink, green, or blue block. Success if defined as being within 0.08 of the goal position. In Task 2 the agent must push 2 blocks, specifically the green and blue block to their respective goal positions, again indicated by a goal image. Success is determined as both blocks being within 0.1 of their respective goal positions. In Tasks 3 and 4 the agent must close or open a door, as specified by a goal image, where success is definged as being within $\pi / 6$ radians of the goal angle. 

\textbf{Evaluation.} During all control experiments, evaluation is done using model predictive control with the latent space dynamics model. Specifically, we do latent MPC as described in Algorithm 1, specifically by planning 15 actions, executing them in the environment, then planning 15 more actions and executing them. Each stage of planning uses the cross entropy method, specifically sampling 1000 action sequences, sorting them by the mean latent distance cost to the goal, refitting to the top 10, and repeating 3 times, before selecting the total lowest cost action.

\section{Additional Results}

In Figure~\ref{qual_bair_extra} we present additional examples of combining SVG with GAP on the BAIR dataset. Again we consider the goal image to be the last state in the trajectory. We see that using GAP leads to more effectively capturing task relevant objects, highlighted in red.

\begin{figure*}[t]
\centerline{\includegraphics[width=1.99\columnwidth]{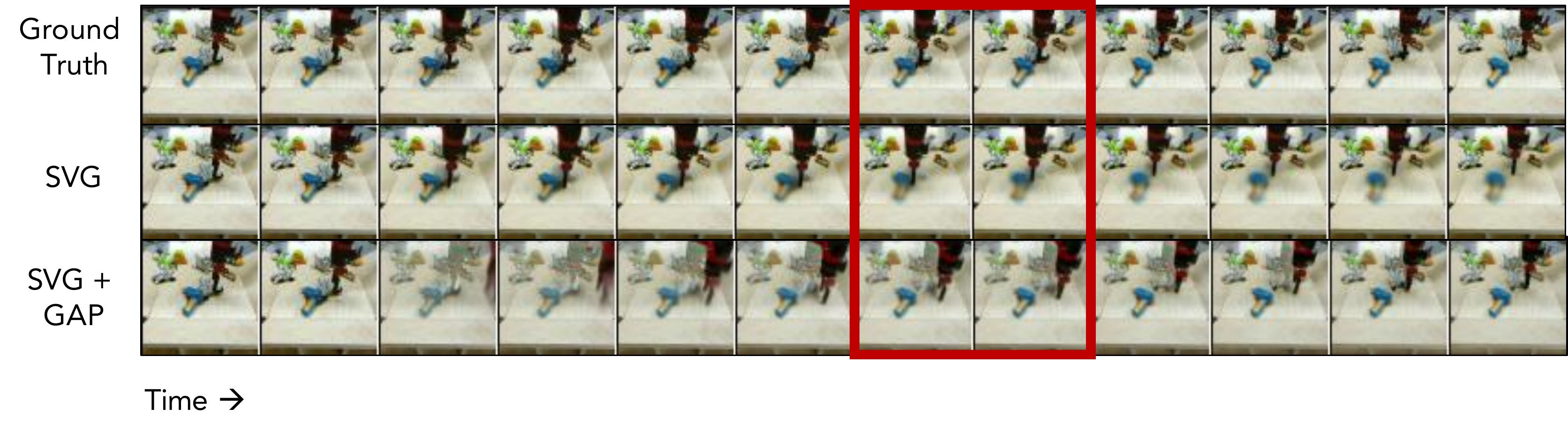}}
\centerline{\includegraphics[width=1.99\columnwidth]{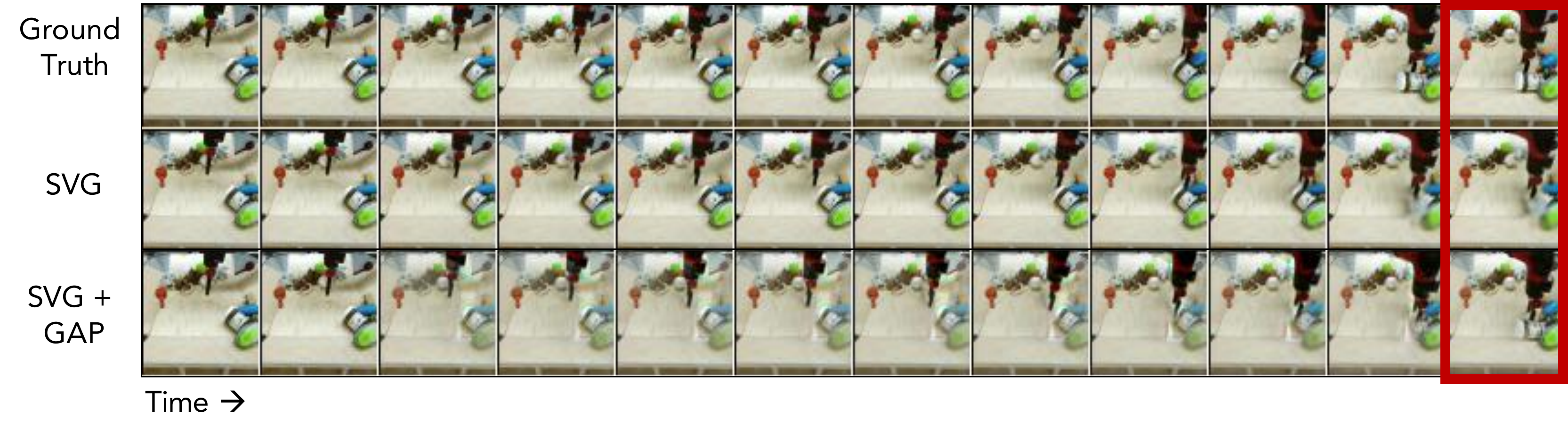}}
\centerline{\includegraphics[width=1.99\columnwidth]{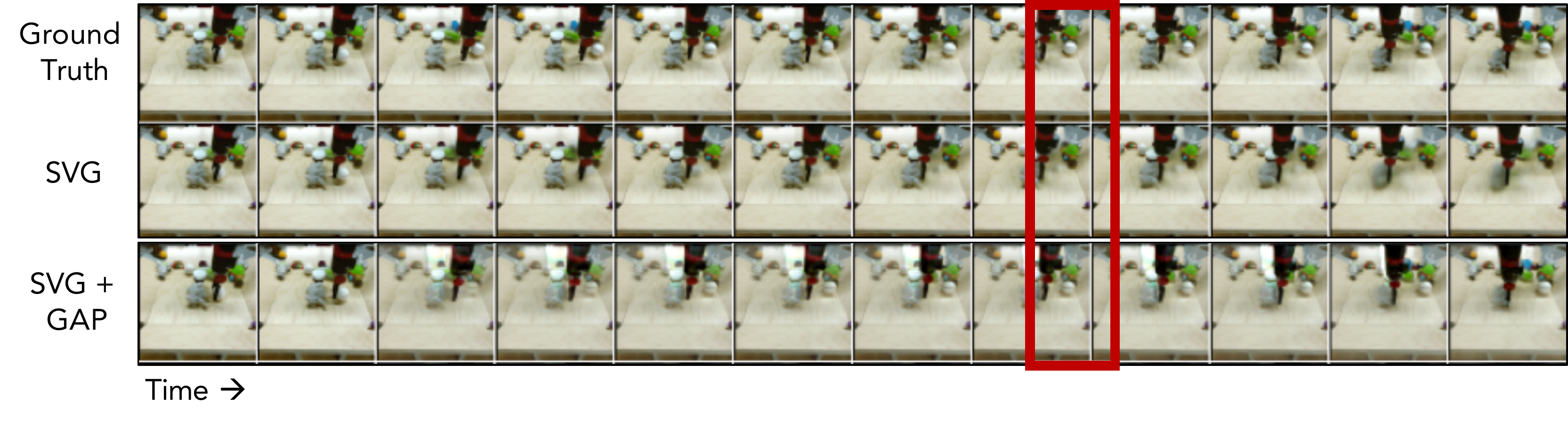}}

\vspace{-0.3cm}
\caption{\small{\textbf{GAP+SVG Video Prediction (BAIR Robot Dataset):} We present additional qualitative examples of action-conditioned SVG with and without GAP on the BAIR robot dataset, predicting on goal-reaching trajectories. Note, in the GAP predictions the goal is added back to the predicted goal-state residual. In this case the goal is the rightmost frame. In the top example, we see that GAP more effectively models the water gun, while normal SVG blurs it out. In the middle example, we see that by using the goal information, GAP is able to more effectively model the jar. In the bottom example we see that SVG blurs out a background object, while GAP does not.}   }
\vspace{-0.3cm}
\label{qual_bair_extra}
\end{figure*}

\end{document}